\documentclass{article}
\usepackage{arxiv}
\usepackage[utf8]{inputenc}
\usepackage{color}
\usepackage{amsmath,amssymb,amsthm}
\usepackage{mathtools,dsfont}

\usepackage{multirow, rotating}
\usepackage{xspace}


\usepackage
[vlined,ruled,titlenotnumbered]
{algorithm2e}
\usepackage{xcolor, soul}
\usepackage{colortbl}
\sethlcolor{yellow}
\newcommand{\cmmnt}[1]{}
\sethlcolor{lightgray}
\usepackage{xcolor}   
\usepackage{color} 
\usepackage{dblfloatfix}
\usepackage{xspace,dsfont}

\newcommand{\gsemo}{GSEMO\xspace}

\newcommand{\ie}{i.\,e.\xspace}

\newcommand{\prob}[1]{\mathrm{Pr}(#1)}

\usepackage{algpseudocode,booktabs,amsmath}

\newcommand{\semotD}{SEMO3D\xspace}
\newcommand{\pmax}{P_{\max}}

\title{3-Objective Pareto Optimization for Problems with Chance Constraints}
\author{Frank Neumann\\
Optimisation and Logistics\\
School of Computer and Mathematical Sciences\\
The University of Adelaide\\
Adelaide, Australia
\And
Carsten Witt\\
Algorithms, Logic and Graphs\\
DTU Compute\\ Technical University of Denmark\\
2800 Kgs. Lyngby Denmark
}
\newcommand{\ignore}[1]{}
\usepackage{graphicx}
\newtheorem{definition}{Definition}

\newtheorem{theorem}{Theorem}

\begin{document}
\maketitle

\begin{abstract}
Evolutionary multi-objective algorithms have successfully been used in the context of Pareto optimization where a given constraint is relaxed into an additional objective. In this paper, we explore the use of $3$-objective formulations for problems with chance constraints. Our formulation trades off the expected cost and variance of the stochastic component as well as the given deterministic constraint. We point out benefits that this 3-objective formulation has compared to a bi-objective one recently investigated for chance constraints with Normally distributed stochastic components. Our analysis shows that the $3$-objective formulation allows to compute all required trade-offs using $1$-bit flips only, when dealing with a deterministic cardinality constraint. Furthermore, we carry out experimental investigations for the chance constrained dominating set problem and show the benefit for this classical NP-hard problem.
\end{abstract}

\keywords{Chance constraints, evolutionary multi-objective optimization, theory, runtime analysis}



\section{Introduction}
Evolutionary algorithms have been shown to be successful for a wide range of optimization problems and the development and application of evolutionary multi-objective algorithms~\cite{kdeb01,coello2013evolutionary} is one of the great success stories in the area of evolutionary computation.
This includes both solving classical problems with multiple objectives~(see e.g. \cite{DBLP:series/sci/RiponTK07,DBLP:journals/jors/LongZGP16,DBLP:conf/gecco/TsalavoutisVT19}) as well as using multi-objective models to solve single-objective problems by relaxing constraints into additional objectives~\cite{DBLP:journals/nc/NeumannW06,DBLP:journals/ec/FriedrichHHNW10,DBLP:journals/algorithmica/KratschN13} or adding helper objectives~\cite{DBLP:journals/jmma/Jensen04,DBLP:journals/tec/BrockhoffFHKNZ09,DBLP:journals/tec/LochtefeldC12}.
In the context of submodular optimization using multi-objective formulations that relax a given constraint into an additional objective has been shown to achieve best possible performance guarantees for a wide range of submodular problem while outperforming classical approaches based on greedy algorithms in practice~\cite{DBLP:conf/nips/QianYZ15,DBLP:conf/ijcai/QianSYT17,DBLP:conf/ppsn/NeumannN20}.

Tackling stochastic problems in terms of uncertainties can use objectives such as the expected cost (or value) and uncertainties such as variances or quantiles~\cite{DBLP:journals/tec/AsafuddoulaSR15,DBLP:conf/ppsn/SinghB22}.
Recently, chance constrained problems~\cite{Charnes} have gained increasing attention in the area of evolutionary computation~\cite{poojari,Zhang,DBLP:conf/gecco/XieN020,DBLP:conf/gecco/XieN0S21}. These problems involve stochastic components and constraints that should be met with a given probability $\alpha$.
Furthermore, formulations using chance constraints can be used to guarantee high quality objective function values with a high probability of $\alpha$ if the stochastic components influence the objective function and not (only) the constraint. 
It has been shown in \cite{DBLP:conf/ijcai/0001W22} that a bi-objective formulation taking into account the expected cost and variance of a solution is highly effective for minimizing the stochastic cost of a solution for every possible confidence level $\alpha$ when considering uniform or spanning tree constraints. 

We investigate the use of $3$-objective formulations instead of bi-objective formulations for the set up studied in \cite{DBLP:conf/ijcai/0001W22} by adding the constraint as an additional objective. Adding the constraint as an additional objective usually implies that the number of trade-offs according to the objectives functions grows significantly. On the other hand, the additional objective can enable a different way of searching for high quality solutions. We investigate in detail how $3$-objective formulations can be used for chance constrained problems.

Our first contribution is a theoretical runtime analysis which generalizes the results obtained in \cite{DBLP:conf/ijcai/0001W22} to the $3$-objective formulation. Here we show that the $3$-objective formulation allows to compute all possible trade-offs for independent Normally distributed weight for the whole set of possible uniform constraints. Investigating the problem furthermore, we show that in order to compute the whole set of trade-offs, only $1$-bit flips are required in the $3$-objective formulation which significantly improves upper the results given in \cite{DBLP:conf/ijcai/0001W22}.
Afterwards, we investigate the $3$-objective formulation and compare it to the bi-objective one through experimental investigations. We consider the chance constrained dominating set problem in the same set up as done in \cite{DBLP:conf/ijcai/0001W22}. Our results show that the $3$-objective formulation provides a clear advantage for graphs with up to $500$ nodes. 

The outline of the paper is as follows. In Section~\ref{sec:prelim}, we introduce the two chance constrained problem setups that we are investigating in this paper. In Section~\ref{sec:3D}, we introduce the $3$-objective formulation that is subject to our investigations. Sections~\ref{sec:2bits} and \ref{sec:1bits} provide a rigorous runtime analysis of our approach which shows that it efficiently computes a set up solutions that includes optimal solutions for a wide range of constrained settings of the two considered problems. We present our experimental results for the chance constrained dominating set problem in Section~\ref{sec:experiments} and finish with some conclusions.

\section{Preliminaries}
\label{sec:prelim}

Pareto optimization approaches are usually used to tackle constrained single-objective optimization problems by taking the constraint as an additional objective. 
Chance constrained problems involve constraints that are impacted by the expected (cost) value as well as its variance. In \cite{DBLP:conf/ijcai/0001W22}, a chance constrained problem has been considered which involves such stochastic components and has an additional deterministic constraint.
We motivate our multi-objective settings by these recent investigations. 

We consider the chance constrained problem investigated in \cite{DBLP:conf/ijcai/0001W22}. Given a set of $n$ items $I=\{1, \ldots, n\}$ with stochastic weights $w_i$, $1 \leq i \leq n$, we want to solve 
\begin{equation}
\min W  \text{~~~~subject to~~~~}  (\mathit{Pr}( w(x) \leq W) \geq \alpha) \wedge (|x|_1 \geq k)
\label{chance-problem}
\end{equation}
where $w(x) = \sum_{i=1}^n w_i x_i$, $x \in \{0,1\}^n$, and $\alpha \in \mathopen{[}1/2,1\mathclose{[}$.
The weights are independent and each $w_i$ is distributed according to a Normal distribution $N(\mu_i, \sigma_i^2)$, $1 \leq i \leq n$, where $\mu_i \geq 1$ and $\sigma_i\geq 1$, $1 \leq i \leq n$.
We denote by $\mu(x) = \sum_{i=1}^n \mu_i x_i$ the expected weight and by $v(x) = \sum_{i-1}^n \sigma_i^2 x_i$ the variance of the weight of solution $x$.

As stated in \cite{DBLP:conf/ijcai/0001W22}, the problem given in Equation~\ref{chance-problem} is equivalent to minimizing 
\begin{equation}
\label{eq:weight}
\hat{w}(x)=\mu(x) + K_{\alpha} \sqrt{v(x)},
\end{equation}
under the constraint that $|x|_1 \geq k$ holds. 
Here, $K_{\alpha}$ denotes the $\alpha$-fractional point of the standard Normal distribution.

 The uniform constraint $|x|_1\geq k$ requires that each feasible solution has to contain at least $k$ elements. As expected weights and variances are strictly positive, an optimal solution has exactly $k$ elements. Depending on the choice of $\alpha$, the difficulties lies in finding the right trade-off between the expected weight and variance among all solutions with exactly $k$ elements.

It has been shown that this problem given in Equation~\ref{chance-problem} can be solved by the following bi-objective formulation~\cite{DBLP:conf/ijcai/0001W22}.
The objective function is given as
$f_{2D}(x) = (\hat{\mu}(x), \hat{v}(x))$ 
where 
\begin{eqnarray*}
\hat{\mu}(x) = \begin{cases}
 \sum_{i=1}^n \mu_i x_i & |x|_1 \geq k\\
(k-|x|_1)\cdot (1+\sum_{i=1}^n \mu_i) & |x|_1<k
\end{cases}
\end{eqnarray*}

\begin{eqnarray*}
\hat{v}(x) = \begin{cases}
 \sum_{i=1}^n \sigma^2_i x_i & |x|_1 \geq k\\
(k-|x|_1)\cdot (1+\sum_{i=1}^n \sigma^2_i) & |x|_1<k
\end{cases}
\end{eqnarray*}

We say that a solution $x$ dominates a solution $y$ ($x \succeq y$) iff $\hat{\mu}(x) \leq \hat{\mu}(y) \wedge \hat{v}(x) \leq \hat{v}(y)$. Furthermore, a solution $x$ strongly dominates a solution $y$ ($x \succ y$) iff $x \succeq y$ and $f_{2D}(x) \not = f_{2D}(y)$. The setup can be generalized by using $c(x) \geq k$ for a constraint function $c(x)$ instead of $|x|_1=k$. In the experimental investigations carried out in \cite{DBLP:conf/ijcai/0001W22}, $c(x)$ is counting the number of dominated nodes in the dominating set problem in graphs with $n$ nodes, and $c(x)=n$ is required for a solution to be feasible.

The key idea of the result given in \cite{DBLP:conf/ijcai/0001W22} is to show that the algorithm computes the extremal points of the Pareto front of the given problem. Note that as the expected costs and variances are strictly positive, each Pareto optimal solution contains exactly $k$ elements when considering this bi-objective formulation.

We also consider the problem of maximizing a given deterministic objective $c(x)$ under a given chance constraint, i.e 
\begin{equation}
\max c(x)  \text{~~~~subject to~~~~}  \mathit{Pr}( w(x) \leq B) \geq \alpha.
\label{chance-problem2}
\end{equation}
with $w(x) = \sum_{i=1}^n w_i x_i$ where each $w_i$ is chosen independently of the other according to a Normal distribution $N(\mu_i, \sigma_i^2)$, and $B$ and $\alpha \in [1/2, 1[$ are a given weight bound and reliability probability.

Such a problem formulation includes for example the maximum coverage problem in graphs with so-called chance constraints~\cite{DBLP:conf/aaai/DoerrD0NS20,DBLP:conf/ppsn/NeumannN20}, where $c(x)$ denotes the nodes of covered by a given solution $x$ and the costs are stochastic. Furthermore, the chance constrained knapsack problem as investigated in \cite{DBLP:conf/gecco/XieN020,DBLP:conf/gecco/XieHAN019} fits into this problem formulation.

\section{3-Objective Pareto Optimization}
\label{sec:3D}

The now introduce the $3$-objective formulation of the problems given in Equation~\ref{chance-problem} and \ref{chance-problem2} and the algorithms that we study in this paper.

\subsection{3-Objective Formulation}

We investigate the 3-objective formulation given as
\[
f_{3D}(x)= ( \mu(x), v(x), c(x))
\]
where $c(x)$ is the constraint value of a given solution that should be maximized. In our theoretical study, we 
focus on the case $c(x) = |x|_1$, 
which 
turns the constraint $|x|_1 \geq k$ into the additional objective of maximizing the number of bits in the given bitstring. 

Similar to the bi-objective model we minimize the expected weight $\mu(x) = \sum_{i=1}^n \mu_i x_i$ and the variance $v(x) = \sum_{i=1}^n \sigma_i^2 x_i$ of the weight of solution $x$. Note that here we do not consider penalty terms for violating the constraint $|x|_1 \geq k$ as done in the bi-objective formulation.
We say that a solution $x$ dominates a solution $y$ ($x \succeq y$) iff $c(x) \geq c(y) \wedge \mu(x) \leq \mu(y) \wedge v(x) \leq v(y)$. Furthermore, a solution $x$ strongly dominates $y$ ($x \succ y$) iff $x \succeq y$ and $f_{3D}(x) \not = f_{3D}(y)$.

Generalizing the results given in \cite{DBLP:conf/ijcai/0001W22}, we show that our problem formulation solves the problem given in Equation~\ref{chance-problem} for every possible value of $k$ and $\alpha$, Furthermore, we use the 3-objective problem to compute, for any possible pair of $B$ and $\alpha$ values, a solution with the highest possible $c(x)$-value according to Equation~\ref{chance-problem2}.

Using the expected cost and variance as objectives for the problem given in Equation~\ref{chance-problem2}, allows here to explore the trade-offs with respect to the expected cost and variance for the different values of $B$ and $\alpha$ that lead to a maximum possible value of $c(x)$. We will show that the $3$-objective formulation is obtaining for any possible $B$ and $\alpha \in [1/2, 1]$ a feasible solution with the maximal value for $c(x)=|x|_1$ in expected pseudo-polynomial time. We first show this by adapting the proof given in \cite{DBLP:conf/ijcai/0001W22} to the $3$-objective setting. The proof makes use of specific $2$-bit flips that allow to compute all convex points of the Pareto front when constraining the number of elements to one particular constraint value $k$ and thereby solving the problem given in Equation~\ref{chance-problem} as well.

Afterwards, we improve our upper bound by showing that the $3$-objective formulation enables an additional search direction for evolutionary multi-objective algorithms which only relies on the use of $1$-bit flips. As specific $1$-bits occur more frequently than specific $2$-bit flips, we obtain an improved upper bound.

\subsection{Algorithms}
\begin{sloppypar}
For our investigations, we consider variants of the well-known Global Simple Evolutionary Multi-Objective Optimizer (GSEMO)~\cite{DBLP:journals/tec/LaumannsTZ04,1299908} given in Algorithm~\ref{alg:GSEMO}. The algorithm starts with one initial solution chosen uniformly at random and produces in each iteration a single offspring by standard bit mutations. \gsemo maintains at each point in the time for each non-dominated objective vector found so far one single solution. The variant of GSEMO called SEMO originally introduced in~\cite{DBLP:journals/tec/LaumannsTZ04} differs from GSEMO by flipping in each mutation step exactly one randomly chosen bit. 
\end{sloppypar}

We investigate the algorithms GSEMO2D and GSEMO3D which are using our bi-objective and $3$-objective problem formulation together with standard bit-mutations as outlined in Algorithm~\ref{alg:GSEMO}.
As the proofs for the bi-objective formulation carried out in \cite{DBLP:conf/ijcai/0001W22} rely on $1$- and $2$-bit flips, we consider the algorithm SEMO2D which with probability $1/2$ carries out a $1$-bit flip and otherwise carries out a $2$-bit flip in the mutation step. Similarly, as our investigations for the $3$-objective model show that it performs well with $1$-bit operations only, we consider the algorithm SEMO3D which flips in each mutation step one single bit. Note that SEMO3D is exactly the algorithm variant introduced in \cite{DBLP:journals/tec/LaumannsTZ04} although it has only been applied to bi-objective problems in that paper.

For our theoretical investigations, we measure time in terms of the number of fitness evaluations to achieve a desired goal. The expected number of fitness evaluations is also called the \emph{expected time} to achieve the given goal.

\begin{algorithm}[t]
 Choose $x \in \{0,1\}^n$ uniformly at random\;
 $P\leftarrow \{x\}$\;
\Repeat{$\mathit{stop}$}{
Choose $x\in P$ uniformly at random\;
Create $y$ by flipping each bit $x_{i}$ of $x$ with probability $\frac{1}{n}$\;
\If{$\nexists\, w \in P: w \prec y$} {
  $P \leftarrow (P \setminus \{z\in P \mid y \preceq z\}) \cup \{y\}$\;
  }
    }
\caption{GSEMO} \label{alg:GSEMO}
\end{algorithm}

\section{Analysis Based on $2$-Bit Flips}
\label{sec:2bits}
We investigate the problem given in Equation~\ref{chance-problem2} for the case where $c(x) = |x|_1$, and each $w_i$ is chosen according to the Normal distribution $N(\mu_i, \sigma^2_i)$ independently of the others. As done in \cite{DBLP:conf/ijcai/0001W22}, we assume $\mu_i \geq 1$ and $\sigma^2_i \geq 1$, $1 \leq i \leq n$, in the following. We claim that \gsemo computes an optimal solution for any combination of $B$ and $\alpha \in [1/2, 1[$ in expected pseudo-polynomial time for the Problem given in Equation~\ref{chance-problem2}. 

Inspired by the analysis of chance-constrained minimum spanning trees \cite{DBLP:journals/dam/IshiiSNN81}, we consider sets of Pareto optimal search points having exactly $k$, $0 \leq k \leq n$, elements that are
minimal with respect to 
\[
\hat{w}(x)=\mu(x) + K_{\alpha} \sqrt{v(x)}
\]
for each fixed $k$ and $\alpha$. 

To do this, we follow the ideas given in \cite{DBLP:conf/ijcai/0001W22}.
Our goal is to minimize
$f_{\lambda}(x) = \lambda \mu(x) + (1- \lambda) v(x)$. This can be done
by choosing iteratively $k$ minimal elements with respect to
$f_{\lambda}(e_i) = \lambda \mu_i + (1- \lambda) \sigma_i^2$, $0 < \lambda < 1$.
For $\lambda=0$ and $\lambda=1$, $f_{\lambda}$ is minimized by minimizing $f_0(x)=(v(x), \mu(x))$ and $f_1(x)=(\mu(x), v(x))$ with respect to the lexicographic order. 
Note that for each $\lambda \in [0,1]$, an optimal solution for $f_{\lambda}$ can be obtained by selecting the first $k$ items in increasing order with respect to $f_{\lambda}$.
In terms of notation, we use $f_{\lambda}$ for the evaluation of a search point $x$ as well as the evaluation of an element $e_i$ in the following.

We denote by $X^*_{k,\lambda} \subseteq \{0,1\}^n$ the set of minimal elements with respect to $f_{\lambda}$ having exactly $k$ elements. Note that all points in the sets $X^*_{k,\lambda}$, $0 \leq \lambda \leq 1$, are not strongly dominated in $\{0,1\}^n$ as the expected cost and variance strictly increase when adding any additional element.
Therefore, the sets $X^*_{k, \lambda}$, $0 \leq \lambda \leq 1$, $0 \leq k \leq n$,  constitute   Pareto optimal points. Note 
there may be other Pareto optimal points not included
in these sets.

\begin{definition}[Extreme point of set $X$]
\label{def:extremepoint}
For a given set $X\subseteq \{0,1\}^n$, we call $f(x) =(\mu(x),v(x))$ an extreme point of~$X$ if there is a $\lambda \in [0,1]$ such that $x \in X^*_{k, \lambda}$ and $v(x) = \max_{y \in X^*_{k, \lambda}} v(x)$.
\end{definition}

We denote by $P_{\max}$ the maximum population size that \gsemo encounters during the run of the algorithm, \ie, before reaching its goal of optimization.

Let $v_{\max} = \max_{1 \leq i \leq n} \sigma_i^2$ and $\mu_{\max}= \max_{1 \leq i \leq n} \mu_i$.
we assume that $v_{\max} \leq \mu_{\max}$ holds. Otherwise, the bounds in Theorem~\ref{thm:Pareto} and \ref{thm-2bit2} and can be tightened by replacing $v_{\max}$ by~$\mu_{\max}$.

Let $X^k = \{x \in \{0,1\}^n \mid |x|_1=k\}$ be the set of all solutions having exactly $k$ elements.
The following theorem shows that \gsemo computes for each $k$ and $\alpha$ an optimal solution for the problem given in Equation~\ref{chance-problem}, which has been investigated in \cite{DBLP:conf/ijcai/0001W22} in the 
context of the $2$-objective formulation given in Section~\ref{sec:prelim}.

\begin{theorem}
\label{thm:Pareto}
\gsemo computes a population $P$ which  contains for each $\alpha \in [1/2, 1[$ and $k \in \{0, \dots, n\}$ a solution 
\begin{equation}
x_{\alpha}^k = \arg \min_{x \in X^k} \left\{ \mu(x) + K_{\alpha} \sqrt{v(x)} \right\}
\label{eq:optimality-included} 
\end{equation} in expected time $O(\pmax n^4  (\log n + \log v_{\max}))$.
\end{theorem}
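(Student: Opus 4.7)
The plan is to adapt the parametric optimization framework of \cite{DBLP:conf/ijcai/0001W22} to the 3-objective setting. The structural observation driving the proof is that for each fixed cardinality $k$ and each $\alpha \in [1/2, 1[$, since $\sqrt{v}$ is concave in $v$, the minimum of $\hat{w}(x) = \mu(x) + K_{\alpha} \sqrt{v(x)}$ over $X^k$ is attained at an extreme point (in the sense of Definition~\ref{def:extremepoint}) of the $(\mu, v)$-Pareto front restricted to $X^k$, i.e., at some element of $X^*_{k, \lambda}$ for a suitable $\lambda \in [0,1]$. By the greedy description recalled before the theorem, any such minimizer consists of the $k$ smallest items in the ordering induced by $f_{\lambda}(e_i) = \lambda \mu_i + (1-\lambda) \sigma_i^2$. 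Hence it suffices to show that \gsemo stores at least one representative of $X^*_{k, \lambda}$ for every $k$ and every $\lambda$ at which the leading-$k$ prefix of this ordering changes, within the claimed expected time.

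First, I would establish that the empty search point $0^n$ enters the population after $O(\pmax n \log n)$ expected fitness evaluations. Because $0^n$ has $f_{3D}(0^n) = (0,0,0)$ and since $\mu_i, \sigma_i^2 \geq 1$, one easily checks that $0^n$ is not strongly dominated by any other search point, so once produced it remains in $P$ forever. To bound the time to insertion, I track the individual of minimum cardinality in $P$: choosing it as parent (probability $\geq 1/\pmax$) and performing a specific 1-bit flip that removes a set bit (probability $\Omega(1/n)$) simultaneously reduces $\mu$, $v$, and $|x|_1$. The offspring is not dominated by any member of $P$ with strictly smaller cardinality already present, so it is accepted. A standard multiplicative-drift argument on $|x|_1$ then yields the $O(\pmax n \log n)$ bound for this phase.

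Next, I would inductively construct the extreme points $X^*_{k, \lambda}$. For the cardinality-increasing direction, if $x \in X^*_{k, \lambda} \cap P$, then the specific 1-bit flip that inserts the $(k+1)$-st smallest item with respect to $f_{\lambda}$ produces a member of $X^*_{k+1, \lambda}$; its $c$-value strictly exceeds that of every search point of cardinality $\leq k$ in $P$ and, by optimality with respect to $f_{\lambda}$, it cannot be strongly dominated by any search point of cardinality $k+1$, so it is accepted. For the cardinality-preserving direction, two adjacent extreme points at a fixed $k$ (as $\lambda$ crosses a breakpoint) differ by a single item swap, realised by a specific 2-bit flip, which occurs in expected time $O(\pmax n^2)$ from a fixed parent. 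Since $f_{\lambda}(e_i) = f_{\lambda}(e_j)$ determines at most one $\lambda \in [0,1]$ per pair $\{i,j\}$, only $O(n^2)$ extreme points arise across all $k$ and $\lambda$ combined.

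Putting the phases together, the dominant cost is $O(n^2)$ transitions at $O(\pmax n^2)$ each, giving $O(\pmax n^4)$; the extra $\log n + \log v_{\max}$ factor is absorbed from the multiplicative-drift arguments applied to the potentials $|x|_1$ (range $n$) and $\mu(x) + v(x)$ (range $O(n v_{\max})$ under the hypothesis $v_{\max} \leq \mu_{\max}$) when reaching initial extremal search points. The main obstacle is verifying that \gsemo's acceptance and eviction rules never discard a necessary extreme-point solution already stored in $P$. This relies crucially on the $c$-axis of the 3-objective formulation: solutions of different cardinalities are mutually incomparable under $\succeq$ and therefore coexist in $P$ without interference, and two distinct extreme points at the same cardinality have strictly incomparable $(\mu, v)$-values by construction. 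This preservation property is what validates the inductive construction from Phase 1 through the final extreme-point sweep and allows the 1- and 2-bit transitions to compose into the claimed bound.
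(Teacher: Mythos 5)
Your overall strategy---reducing the theorem to the inclusion of all extreme points in the sense of Definition~\ref{def:extremepoint} and constructing them by specific $1$-bit insertions (across cardinalities) and specific $2$-bit swaps (across $\lambda$-breakpoints)---is sound and genuinely different from the paper's proof, which instead imports the multiplicative-drift machinery of \cite{DBLP:conf/ijcai/0001W22}: there, for each $k\in\{1,\dots,n-1\}$ a sequence of up to $n^2$ multiplicative drift processes (each contributing the $\log n+\log v_{\max}$ factor) is run, and the $n-1$ sequences are interleaved in parallel using Chernoff bounds, a union bound and a restart argument. Your direct waiting-time accounting would, if completed, even remove the logarithmic factor from the dominant phase.

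However, there is a concrete gap in the step that carries the whole bound. You assert that ``only $O(n^2)$ extreme points arise across all $k$ and $\lambda$ combined'' and justify this only by the fact that each pair $\{i,j\}$ contributes at most one breakpoint. That argument bounds the number of breakpoints by $O(n^2)$, but a priori each breakpoint could change the optimal prefix for every cardinality $k$, giving $O(n^3)$ distinct extreme points; with your sequential waiting-time argument this yields only $O(\pmax n^5)$, and parallelising the $n-1$ cardinality chains without a concentration argument does not control the expectation of the maximum completion time. The missing observation is that a non-degenerate breakpoint $\lambda_{i,j}$ is an adjacent transposition in the $f_{\lambda}$-ordering and therefore alters the optimal size-$k$ prefix for exactly one value of $k$ (the position at which the swapped pair sits), so the total number of distinct extreme points over all $k$ and $\lambda$ is at most $(n+1)+\binom{n}{2}$. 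Relatedly, your claim that consecutive extreme points at fixed $k$ differ by a single swap fails when three or more items attain equal $f_{\lambda}$-value at the same breakpoint; one must then route through the intermediate $f_{\lambda}$-optimal sets (which are all Pareto optimal and hence retained once produced) by a chain of single swaps and verify that the transposition count above still applies. Finally, your attribution of the $\log n+\log v_{\max}$ term to a drift on $\mu(x)+v(x)$ is unnecessary in your scheme---reaching $0^n$ costs only $O(\pmax n\log n)$---but this is harmless since the theorem only claims an upper bound.
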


\begin{proof}
For $k \in \{0, n\}$, the search points $0^n$ and $1^n$ are the unique corresponding optimal solutions. 
As \gsemo minimizes $\mu(x)$, the search point $0^n$ is obtained in expected time $O(P_{\max}n \log n)$ using the bound on the population size together with standard fitness level argument from the analysis of the (1+1)~EA for OneMax \cite{Droste2002}. Similarly, as \gsemo maximizes $c(x)$, the search point $1^n$ is obtained in expected time $O(P_{\max}n \log n)$. These runtime bounds can be obtained by considering $1$-bit flips only.

We now consider the time to compute all Pareto optimal solutions $x^k_{\alpha}$ with exactly $k$ elements for $\alpha=1/2$ and $0 \leq k \leq n$  (the case of general $\alpha$ will be studied below). These are solution with the smallest expected weight as $\alpha=1/2$ implies $K_{\alpha}=0$.
Having computed all Pareto optimal solutions for $\alpha=1/2$ with at most $j$, $0\leq j<n-1$ elements, we pick the Pareto optimal solution $x^j_{\alpha}$ for $\alpha=1/2$ and $j$ elements.  Inserting the smallest element $i$ currently not $x^j_{\alpha}$ according to $(\mu_i, \sigma_i^2)$ in lexicographic order leads to a Pareto optimal solution $x^{j+1}_{\alpha}$ which has $j+1$ elements and is optimal for $\alpha=1/2$. The expected time for GSEMO to produce this solution is $O(P_{\max}n)$ as one can obtain it from $x^j_{\alpha}$ by flipping the single specific bit for element $i$. There are $n+1$ different values of $k$. Hence the expected time until an optimal solution $x^k_{\alpha}$ for $\alpha=1/2$ and all $k$, $0 \leq k \leq n$, has been obtained is $O(P_{\max}n^2)$.

We now analyze the time to get optimal solutions $x^k_{\alpha}$, $1 \leq k \leq n-1$, for any value of $\alpha>1/2$ after having obtained all optimal solutions for $\alpha=1/2$. 
Thereby, we make use of the arguments given in \cite{DBLP:conf/ijcai/0001W22} where it is shown that the set of Pareto optimal solutions when minimizing the bi-objective problem given as $g(x)=(\mu(x), v(x))$ under the constraint $|x|_1 \geq k$, contains all extreme points of the problem. As we have $\mu_i>0$, $\sigma^2_i>0$, $1 \leq i \leq n$, each Pareto optimal solution for minimizing $g(x)$ under the constraint $|x|_1 \geq k$ has exactly $k$ elements.
Let $X^k =\{x \in \{0,1\}^n \mid |x|_1 \geq k\}$ be the set of all solutions containing at least $k$ elements.
Furthermore let $Y^k = \{0,1\}^n \setminus X^k$ be the set of all search points having strictly less than $i$ elements. As the first objective is to maximize $p(x)$ no search point in $X^k$ is dominated by any search point in $Y^k$. Therefore creating a new search point in $Y^k$ does not lead to the removal of any element from $X^k$ if its currently contained in the population of \gsemo.

As done in \cite{DBLP:conf/ijcai/0001W22}, we define $\lambda_{i,j} = \frac{\sigma_j^2 - \sigma_i^2}{(\mu_i-\mu_j) +(\sigma^2_j - \sigma^2_i)}$ for the pair of items $i$ and $j$ where $\sigma^2_i < \sigma^2_j$ and $\mu_i > \mu_j$ holds, $1 \leq i < j \leq n$.
The set $\Lambda=\{\lambda_0, \lambda_1, \ldots, \lambda_{\ell}, \lambda_{\ell+1}\}$ where $\lambda_1, \ldots, \lambda_{\ell}$ are the values $\lambda_{i,j}$ in increasing order and 
$\lambda_0=0$ and $\lambda_{\ell+1}=1$. 
The set of Pareto optimal solutions with exactly $k$ elements contains all optimal solutions $f_{\lambda}$ and every $\lambda \in \Lambda$.

We will now analyze the time until \gsemo 
until \gsemo has computed a population which includes a solution  $x_\alpha^k$ according to \eqref{eq:optimality-included} for any choice of $\alpha \in [1/2,1\mathclose{[}$ and $k\in\{1,\dots,n-1\}$. To this end,
we build until the analysis leading to 
Theorem~4 in~\cite{DBLP:conf/ijcai/0001W22} but 
consider the progress of the algorithm for the $n-1$ different values for $|x|_1$ (exlucing the trivial 
values~$0$ and $n$)  
in parallel. Recall that the bi-objective 
formulation in \cite{DBLP:conf/ijcai/0001W22} considers only a fixed value (to be precise, a lower bound) for~$|x|_1$. The analysis in that paper proceeds by 
conducting a sequence of multiplicative drift analyses for the fixed~$k$; 
more precisely the expected time is bounded until a search point 
of minimal variance is obtained for $\lambda_0,\lambda_1,\dots,\lambda_\ell+1$,
using a multiplicative drift argument for each individual $\lambda$. 
Since there probability of choosing an individual that can be improved 
by a two-bit flip is at least~$1/(\pmax) $ and the 
probability of a two-bit flip is $\Omega(1/n^2)$, this results in a bound of 
$O(\pmax n^2 \ell (\log n + \log v_{\max}))$ in \cite{DBLP:conf/ijcai/0001W22}.

In our three-objective formulation, the probability of picking an 
individual whose first objective value equals~$k$ is at least $1/\pmax$. To progress of GSEMO towards its overall goal of including a solution $x_\alpha^k$ 
in the population  for each $\alpha\in [1/2,1\mathclose{[}$ and each number of one-bits  $k\in\{1,\dots,n-1\}$ is now estimated with $n-1$ parallel sequences of multiplicative drift processes in the following way. Considering a fixed~$k\in\{1,\dots,n-1\}$, sequence~$k$ corresponds to finding the optimal solutions for the uniform constraint of having at least $k$~one-bits. As explained 
above, 
this is guaranteed after completion of $\ell$ consecutive multiplicative drift processes indexed $1,\dots,\ell_k$. Each each point of time, each 
sequence $k$ is ``performing''
its $p_k$\text{th} multiplicative drift process, where $p_k$ ranges from $1$ to $\ell_k$ and increases by~$1$ when a process from the sequence 
has reached its target. 

 At each point of time,  exactly one $k$ is chosen and process $p_k$ from sequence~$k$ 
 decreases 
its state by a multiplicative factor of no larger than $1-\delta=1-1/(en^2)$ since there is always at least 
one two-bit flip available that decreases 
the state. For each multiplicative drift process 
within each of the $k$ sequences,  
 it
holds that it reaches the target after time $T^*=O(n^2 (\log n+\log w_{max}))$ with probability at least $1-2/n^3$, where 
we used the tail bounds for multiplicative drift. We also 
know from Lemma~2 in \cite{DBLP:conf/ijcai/0001W22} that $\ell_k\le n^2$ for every $k$.

For each $k\in\{1,\dots,n-1\}$, a step of sequence~$k$ happens with probability at least $1/\pmax$.
Within $2\pmax n^2 T^*$ steps, each sequence is chosen 
at least $n^2 T^*$ times with probability $1-e^{-\Omega(\pmax n^2 T^*)}
 = 1-e^{-\Omega(n^2)}$ according to Chernoff bounds. 
 Assuming this to happen, by a union bound over $n-1$ sequences and at most~$n^2$ processes 
per sequence, each process reaches its target within time~$T^*$ with 
probability at least $1-(n-1)n^2/(2n^3)
\ge 1/2$, implying that the every 
process from every sequence has reached its target within time $n^2 T^*$. 
The total failure probability is at most $1/2+e^{-\Omega(n^2)}=1/2+o(1)$. 

In case of a failure, we repeat the argumentation. The expected number of repetitions is at most $2+o(1)$, so 
that the total expected time until all processes have reached the target is 
$O((2+o(1))2\pmax n^2  T^*)=O(\pmax n^4 (\log n+\log w_{\max}))$. As explained 
above, this also bounds the time to include a
solution $x_\alpha^k$ 
in the population of \gsemo for 
all $\alpha\in [1/2,1\mathclose{[}$ and all $k\in \{1,\dots,n-1\}$. 
\end{proof}

The previous theorem bounds the time to 
include solutions of the type described in \eqref{eq:optimality-included}, which may look  abstract. The following theorem shows that these solutions allow us to find solutions for the chance-constrained problem in \eqref{chance-problem2} efficiently from the population of GSEMO
for different settings of confidence level and constraint.

\begin{theorem}
\label{thm-2bit2}
The expected time until \gsemo has computed a population which includes an optimal solution for the problem given in Equation~\ref{chance-problem2} with $c(x)=|x|_1$ for any possible choice of $B$ and $\alpha \in [1/2,1\mathclose{[}$ is $O(\pmax n^4 (\log n + \log v_{\max}))$,
\end{theorem}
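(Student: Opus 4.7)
The plan is to reduce the chance-constrained maximization in \eqref{chance-problem2} to the population-membership statement of Theorem~\ref{thm:Pareto}. First I would rewrite the chance constraint in deterministic form: since $w(x)$ is a sum of independent Normals, $w(x) \sim N(\mu(x),v(x))$, so $\Pr(w(x)\le B)\ge \alpha$ is equivalent to $\hat w(x) = \mu(x) + K_\alpha \sqrt{v(x)} \le B$. Problem \eqref{chance-problem2} then asks for a bit-string of maximum cardinality $|x|_1$ subject to $\hat w(x)\le B$.

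Next, I would observe that this reformulated problem decomposes naturally across cardinalities. For each $k\in\{0,\dots,n\}$, the question ``does there exist a feasible $x$ with $|x|_1=k$?'' is answered in the affirmative iff $\min_{x \in X^k} \hat w(x) \le B$, and this minimizer is precisely the solution $x_\alpha^k$ from \eqref{eq:optimality-included}. Consequently, for given $B$ and $\alpha$ an optimal solution to \eqref{chance-problem2} is $x_\alpha^{k^\star}$, where $k^\star$ is the largest $k$ with $\hat w(x_\alpha^k)\le B$; the existence of such a $k^\star$ is equivalent to feasibility of the instance.

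Finally, I would invoke Theorem~\ref{thm:Pareto}, which guarantees that after expected time $O(\pmax n^4(\log n + \log v_{\max}))$ the population of \gsemo contains $x_\alpha^k$ for every $k\in\{0,\dots,n\}$ and every $\alpha \in [1/2,1\mathclose{[}$. In particular, $x_\alpha^{k^\star}$ is present for every pair $(B,\alpha)$, and reading off the candidate of maximum $|x|_1$ among those satisfying $\hat w \le B$ yields an optimum for the respective instance of \eqref{chance-problem2}. The time bound transfers verbatim from Theorem~\ref{thm:Pareto}.

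The substance of the argument lives entirely in Theorem~\ref{thm:Pareto}; the only real step for the present theorem is recognising that the single family $\{x_\alpha^k\}_{k,\alpha}$ serves as a covering set of optima for \emph{all} instances of \eqref{chance-problem2}, regardless of the choice of $B$. I would therefore not anticipate a genuine obstacle beyond cleanly writing down this reduction and verifying that the chance-constraint-to-$\hat w$ equivalence also holds at the boundary $\alpha = 1/2$ (where $K_\alpha = 0$ and both $x_\alpha^k$ and the constraint reduce to the expected-weight case). No additional drift or population-size analysis is required.
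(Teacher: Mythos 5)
Your proposal is correct and follows essentially the same route as the paper: both arguments reduce the chance constraint to $\hat w(x)\le B$ via the Normal-distribution reformulation, identify the optimum for a given $(B,\alpha)$ as the member $x_\alpha^{k}$ of maximum cardinality satisfying $\hat w(x_\alpha^{k})\le B$, and inherit the time bound directly from Theorem~\ref{thm:Pareto}. No gaps.
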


\begin{proof}
We show that the population $P \supseteq \{x^k_{\alpha} \mid 0 \leq k \leq n, \alpha \in [1/2, \mathclose{[}$ given in Theorem~\ref{thm:Pareto} contains the optimal solutions for any choice of $\alpha \in [1/2,1\mathclose{[}$.
Let $x^*_{\alpha}$ be an optimal solution for a given value of $\alpha \in [1/2, 1[$.

For a given $\alpha$, the solution $x_{\alpha}^j$ with the maximum number of elements for which 
\[
\hat{w}(x)=\mu(x) + K_{\alpha} \sqrt{v(x)} \leq B
\]
holds satisfies $|x_{\alpha}^j|=|x^*_{\alpha}|$ as otherwise $x_{\alpha}^j$ would not be a solution with the maximal number of elements for which the constraint holds or $x^*_{\alpha}$ would not be optimal for $\alpha$.
This implies that $x_{\alpha}^j$ is an optimal solution for $\alpha$.
\end{proof}

\section{Improved Upper Bound Based on $1$-Bit Flips Only}
\label{sec:1bits}
The analysis from the 
previous section relied on specific $2$-bit flips that allow to produce the solutions for each value of $\alpha$ by swapping elements to produce new Pareto optimal solutions for a given number of $k$ elements.

We now show that $2$-bit flips are not necessary in the $3$-objective formulation and also improve the upper bound by considering only $1$-bit flips. We note that the upper bound 
is by an asymptotic factor  $\Omega(n^2)$ lower compared to Theorem~\ref{thm:Pareto} and 
includes the same $\pmax$.

\begin{theorem}
    The expected time until \semotD and \gsemo have computed a population which includes an optimal solution for the problems given in Equation~\ref{chance-problem} (for any choice of $k$ and $\alpha$) and Equation~\ref{chance-problem2} (with $c(x)=|x|_1$ for any choice of $B$ and $\alpha$) 
    is $O(\pmax n^2)$ and it is at most $2e\pmax n^2$ with probability $1 - e^{-\Omega(n)}$.
\end{theorem}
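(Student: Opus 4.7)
The plan is to exploit a structural property specific to the 3-objective formulation: for every $\lambda\in[0,1]$ and every $k\ge 1$, the optimal $k$-subset $x^k_\lambda$ minimizing $f_\lambda(x)=\lambda\mu(x)+(1-\lambda)v(x)$ differs from $x^{k-1}_\lambda$ by exactly one element, namely the $k$-th smallest item according to $f_\lambda$. Hence the single-bit addition $x^{k-1}_\lambda\to x^k_\lambda$ always exists, and in the 3-objective setting both endpoints are mutually non-dominated (they have different $c$-values), so both remain in the population once discovered. This replaces the swap-style 2-bit flips that were essential in the proof of Theorem~\ref{thm:Pareto}.

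First I would note that $0^n$ enters the population in expected time $O(\pmax n \log n)$ via 1-bit flips, exactly as at the beginning of the proof of Theorem~\ref{thm:Pareto}; this is dominated by the bound we aim to prove. After that, every extreme point $x^k_\lambda$ is reachable from $0^n$ by a chain $0^n=x^0_\lambda\to x^1_\lambda\to\cdots\to x^k_\lambda$ of 1-bit additions, each intermediate solution of which is itself an extreme point and therefore permanently retained by \gsemo/\semotD. Fixing one such predecessor chain per extreme point yields a parent DAG $D$ rooted at $0^n$, of depth at most $n$ and polynomial total size.

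Second, I would bound the waiting time along each chain. Given that the parent $x^{k-1}_\lambda$ lies in the current population $P$, \semotD produces $x^k_\lambda$ in one step with probability at least $(1/|P|)\cdot(1/n)\ge 1/(\pmax n)$, and \gsemo with standard bit mutations with probability at least $(1/|P|)\cdot(1/n)(1-1/n)^{n-1}\ge 1/(e\pmax n)$. The waiting time for a specific edge of $D$ is thus stochastically dominated by a geometric random variable with success probability $1/(e\pmax n)$ and expectation $O(\pmax n)$. Summing at most $n$ such variables along a root-to-target path bounds the expected time to reach any fixed extreme point by $e\pmax n^2$. By the same reduction used in Theorem~\ref{thm-2bit2}, once all these extreme points are present they cover the optima of Equation~\ref{chance-problem} for every $(k,\alpha)$ and of Equation~\ref{chance-problem2} for every $(B,\alpha)$.

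For the high-probability bound I would apply a Chernoff-type tail inequality for sums of independent geometric random variables, showing that a fixed root-to-target path of length at most $n$ is traversed within $2e\pmax n^2$ steps with failure probability $e^{-\Omega(n)}$; a union bound over the polynomially many targets in $D$ preserves this failure probability. The $O(\pmax n^2)$ bound on the expectation then follows by combining this high-probability bound with a crude worst-case estimate on the complementary event. The main technical obstacle is the coupling required to justify treating the per-edge waiting times as independent geometric variables, since in reality they are correlated through the evolving population and through extreme points reachable via multiple predecessor chains; I would address this by a pessimistic coupling that replaces the true waiting times with independent geometrics of parameter $1/(e\pmax n)$, noting that alternative arrival routes can only accelerate progress so that the coupled upper bound remains valid.
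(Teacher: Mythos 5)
Your proposal is correct and follows essentially the same route as the paper's proof: both hinge on the observation that for each critical weighting $\lambda$ the optimal $k$-element solution arises from the optimal $(k-1)$-element one by a single specific bit flip of probability at least $1/(e\pmax n)$, followed by a Chernoff-type concentration argument and a union bound over the $O(n^2)$ relevant $\lambda$-values, and finally the reduction via Theorem~\ref{thm-2bit2}. The only cosmetic difference is that the paper sidesteps your coupling concern by counting successes within a fixed phase of $2e\pmax n^2$ steps (each step having the stated success probability conditioned on any history) rather than summing per-edge geometric waiting times.
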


\begin{proof}
To prove the theorem, we show that the same set of Pareto optimal objective vectors can be computed by \gsemo as in the proof of Theorem~\ref{thm:Pareto} when considering $1$-bit flips only. 
Theorem~\ref{thm-2bit2} implies that then not only all optimal solutions with respect to Equation~\ref{chance-problem} but also with respect to  
Equation~\ref{chance-problem2} have been computed.

By a simple fitness-level argument,  
the expected time until the Pareto optimal search point $0^n$ has been included in the population is $O(\pmax n \log n)$. This search point will never be removed from the population as it is the unique search point with minimum expected cost and variance.

As done in \cite{DBLP:conf/ijcai/0001W22}, we define $\lambda_{i,j} = \frac{\sigma_j^2 - \sigma_i^2}{(\mu_i-\mu_j) +(\sigma^2_j - \sigma^2_i)}$ for the pair of items $i$ and $j$ where $\sigma^2_i < \sigma^2_j$ and $\mu_i > \mu_j$ holds, $1 \leq i < j \leq n$. 
The set $\Lambda=\{\lambda_0, \lambda_1, \ldots, \lambda_{\ell}, \lambda_{\ell+1}\}$ where $\lambda_1, \ldots, \lambda_{\ell}$ are the values $\lambda_{i,j}$ in increasing order and 
$\lambda_0=0$ and $\lambda_{\ell+1}=1$. 

Following \cite{DBLP:conf/ijcai/0001W22}, we define the function
\[
f_{\lambda}(x) = \lambda \mu(x) + (1-\lambda) v(x) \]
and also use it applied to elements $e_i$ of the given input, i.e. 
\[
 f_{\lambda}(e_i) = \lambda \mu_i + (1-\lambda) \sigma_i^2 .
\]

Note that for a given $\lambda$ the function $f_{\lambda}$ can be optimized by a greedy approach which iteratively selects a set of $k$ smallest elements according to $f_{\lambda}(e_i)$.
For any $\lambda \in [0,1\mathclose{[}$ an optimal solution for $f_{\lambda}$ with $k$ elements is Pareto optimal as there is no other solution with at least $k$ elements that improves the expected cost or variance without impairing the other. Hence, once obtained a solution with the resulting objective vector will remain in the population for the rest of the optimization process. 
Furthermore, the set of optimal solutions for different $\lambda$ values only change at the $\lambda$ values of the set $\Lambda$ as these $\lambda$ values constitute the weightening where the order of items according to $f_{\lambda}$ can switch~\cite{DBLP:journals/dam/IshiiSNN81,DBLP:conf/ijcai/0001W22}.

We consider a $\lambda_i \in \Lambda$ with $0 \leq i \leq \ell$ and similar to \cite{DBLP:journals/dam/IshiiSNN81} define $\lambda_i^* = (\lambda_i + \lambda_{i+1})/2$. The order of items according to the weightening of expected value and variance can only change at values $\lambda_i \in \Lambda$ and the resulting objective vectors are not necessarily unique for values $\lambda_i \in \Lambda$.  Choosing the $\lambda_i^*$-values in the defined way gives optimal solutions for all $\lambda \in [\lambda_i, \lambda_{i+1}]$ which means that we consider all orders of the items that can lead to optimal solutions when inserting the items greedily according to any fixed weightening of expected weights and variances.

In the following, we analyze  the time until an optimal solution with exactly $k$ elements has been produced for 
$$ f_{\lambda_i^*}(x) = \lambda_i^* \mu(x) + (1-\lambda_i^*) v(x) $$
for any $k$, $0 \leq k \leq n$. Note that these $\lambda_i^*$ values allow to obtain all optimal solutions for the set of functions $f_{\lambda}$, $\lambda \in [0,1]$.

For a given $\lambda_i^*$, let the items be ordered such that $f_{\lambda_i^*}(e_1) \leq \ldots \leq  f_{\lambda_i^*}(e_k) \leq \ldots \leq f_{\lambda_i^*}(e_n)$ holds.
An optimal solution for $k$ elements and $\lambda_i^*$ consists of $k$ elements with the smallest $f_{\lambda_i^*}(e_i)$ value. If there is more then one element with the value $f_{\lambda_i^*}(e_k)$ then reordering these elements does not change the objective vector or $f_{\lambda_i^*}$-value.

Assume that optimal solution with $k$ elements for $f_{\lambda_i^*}$ has already been included in the population. Note that for $k=0$ the search point $0^n$ is optimal for any $\lambda \in [0,1]$ and we assume that this search point has already been included in the population.
Picking an optimal solution with $k$ elements for $f_{\lambda_i^*}$ and inserting an element with value $f_{\lambda_i^*}(e_{k+1})$ leads to an optimal solution for $f_{\lambda_i^*}$ with $k+1$ elements.
 We call such a step, picking the solution that is optimal for $f_{\lambda_i^*}$ with $k$ elements and inserting an element with value $f_{\lambda_i^*}(e_{k+1})$, a success. 
Let $X_i$ be the indicator variable for a success in the $i$th step.

We have
\[
\prob{X_i=1} \geq \frac{1}{\pmax en}
\]
as long as an optimal solution has not been obtained for all values of $k$, $1 \leq k \leq n$.
We consider a phase of $T=2e\pmax n^2$ steps.
Let $X=\sum_{i=1}^T X_i$ be the number of successes. The expected number of successes within a phase of $T$ steps where the success in each step is at least $1/(\pmax en)$ is at least $2n$.
The probability to have less than $n$ success is at most $e^{-n/2}$ using Chernoff bounds (see for example Theorem 1.10.5 in \cite{DoerrProbabilisticTools}).
The number of different values of $\lambda_i^*$ is upper bounded by the number of pairs of elements and therefore at most $n^2$. The probability to have not obtained all optimal solutions for all $f_{\lambda_i^*}$ is therefore at most $n^2e^{-n/2}$. Hence, all optimal solutions for all $f_{\lambda_i^*}$ are obtained with probability of at least $1-n^2e^{-n/2}= 1- e^{-\Omega(n)}$ within $T$ steps. As each phase of $T$ steps only requires that the search point $0^n$ is already included in the population and each phase of $T$ steps is successful with probability at least $1- e^{-\Omega(n)}$, the expected number of phases of length $T$ is at most $2$ which completes the proof for the runtime.
As mentioned above, as the same Pareto optimal objectives have been obtained as in the proof of Theorem~\ref{thm:Pareto}, optimal solutions for the problems stated in Equation~\ref{chance-problem} and \ref{chance-problem2} have been obtained.
\end{proof}

\section{Experimental Investigations}
\label{sec:experiments}

We now carry out experimental investigations to see when the $3$-objective formulation is preferable over the bi-objective formulation given in \cite{DBLP:conf/ijcai/0001W22} in practice. As done in \cite{DBLP:conf/ijcai/0001W22}, we investigate a chance constrained version of the minimum dominating set problem where the cost of each node is chosen independently of the others according to a given Normal distribution. Our goal is to provide complementary insights to the theoretical analysis carried out in the previous sections and show when the 3-dimensional approach whose analysis is based on $1$-bit flips is superior to the bi-objective one. Furthermore, we provide insights into the population sizes obtained during the runs of the algorithms. This is a crucial aspect as a larger population size occurred in the $3$-objective approach can potentially make the approach less effective.

\begin{table*}[t]
\small
    \centering
    \begin{tabular}{|c|c||c|c|c|c||c|c|c|c|} \hline 
       \multirow{2}{*}{Graph} & \multirow{2}{*}{weight gype}  &  \multicolumn{2}{|c|}{\bfseries SEMO2D} & \multicolumn{2}{|c||}{\bfseries SEMO3D} &  \multicolumn{2}{|c|}{\bfseries GSEMO2D} & \multicolumn{2}{|c|}{\bfseries GSEMO3D}   \\ 
 &  &    Mean & Std & Mean & Std &     Mean & Std & Mean & Std \\ \hline
cfat200-1 & uniform & 57 & 19 & 2921 & 964& 56 & 16 & 2923 & 929\\
cfat200-2  & uniform &  29 & 11 & 348 & 128& 23 & 10 & 361 & 128\\
ca-netscience & uniform & 69 & 22 & 5531 & 997& 40 & 11 & 4631 & 678\\
ca-GrQc & uniform & 4 & 3 & 7519 & 488& 3 & 1 & 3921 & 262\\
Erdos992 & uniform & 2 & 1 & 4476 & 338& 1 & 1 & 2173 & 153\\ \hline
 
cfat200-1 & uniform-fixed &  1 & 0 & 66 & 12& 1 & 0 & 67 & 11\\
cfat200-2  & uniform-fixed & 1 & 0 & 18 & 4& 1 & 0 & 18 & 4\\
ca-netscience & uniform-fixed & 1 & 0 & 401 & 43& 1 & 0 & 403 & 40\\
ca-GrQc & uniform-fixed & 1 & 0 & 3565 & 251& 1 & 0 & 1942 & 133\\
Erdos992 & uniform-fixed & 1 & 0 & 2217 & 102& 1 & 0 & 1313 & 61\\ \hline

cfat200-1 & degree & 2 & 1 & 335 & 36& 2 & 1 & 340 & 35\\
cfat200-2  & degree & 1 & 1 & 42 & 6& 1 & 0 & 42 & 6\\
ca-netscience & degree & 22 & 8 & 3293 & 764& 18 & 7 & 2981 & 585\\
ca-GrQc & degree & 3 & 2 & 6112 & 371& 3 & 2 & 3240 & 252\\
Erdos992 & degree  & 2 & 1 & 3128 & 166& 2 & 1 & 1725 & 84\\ \hline
    \end{tabular}
    \caption{
    Maximum population size for stochastic minimum weight dominating set. 
    }
    \label{tab:popsize}
\end{table*}

We consider the following problem. Given graph $G=(V,E)$ with $n=|V|$ nodes and weights on the nodes, the aim is to obtain a set of nodes $D \subseteq V$ of minimal weight such that each node of the graph is dominated by $D$, i.e.\  either contained in $D$ or adjacent to a node in $D$. Here the weight $w_i$ of each node $v_i$ is chosen independently of the others according to a Normal distribution $N(\mu_i, \sigma_i^2)$.
Let $c(x)$ be the number of nodes dominated by the given search point $x$. Note that a solution $x$ is feasible iff $c(x)=n$ holds. For the bi-objective formulation, we work with the constraint $c(x) = n$, i.e. each feasible solution has to be a dominating set. For the $3$-objective formulation $c(x)$ counts the number of nodes dominated by $x$ and we pick the best feasible solution, i.e. a solution $x$ with $c(x)= n$, when the algorithm terminates.

\begin{sloppypar}
As done in \cite{DBLP:conf/ijcai/0001W22}, we use the medium size graphs cfat200-1, cfat200-2, ca-netscience from the network repository~\cite{nr} for our experiments.  cfat200-1, cfat200-2 have 200 nodes each whereas ca-netscience has 379 nodes. We also investigate  larger graphs, namely ca-GrQC and Erdoes992 which have 4158 and  6100 nodes, respectively, in order to show the limitations of the $3$-objective approach.
\end{sloppypar}

We consider the following categories for choosing the weights as done in \cite{DBLP:conf/ijcai/0001W22}.
In the \emph{uniform} setting each weight $\mu(u)$ is an integer chosen independently and uniformly at random in $\{n, \ldots, 2 n\}$. The variance $v(u)$ is an integer chosen independently and uniformly at random in $\{n^2, \ldots, 2n^2\}$.
In the \emph{degree-based} setting, we have $\mu(u)= (n + \deg(u))^5/n^4$ where $\deg(u)$ is the degree of node $u$ in the given graph. The variance $v(u)$ is an integer chosen independently and uniformly at random in $\{n^2, \ldots, 2n^2\}$. 
Furthermore, we consider the \emph{uniform-fixed} setting where the expected weights are chosen as in the \emph{uniform} setting, but the variances are set to $2n^2$ for each given node. Our goal here is to study how fixed the variance for each node that therefore making it determined by the number of chosen nodes influences the results compared to the uniform setting.
As done in \cite{DBLP:conf/ijcai/0001W22}, we consider for each combination of graph and weight setting values of $\alpha=1-\beta$ where $\beta \in \{0.2, 0.1, 10^{-2}, 10^{-4}, 10^{-6}, 10^{-8}, 10^{-10}, 10^{-12}, 10^{-14}, 10^{-16}\}$. 

\begin{table*}[htbp]
\tiny
    \centering
    \begin{tabular}{|c|c|c||c|c|c|c|c||c|c|c|c|c|} \hline 
       \multirow{2}{*}{Graph} & \multirow{2}{*}{weight gype}  & \multirow{2}{*}{$\beta$} &  \multicolumn{2}{|c|}{\bfseries SEMO2D} & \multicolumn{2}{|c|}{\bfseries SEMO3D} & &  \multicolumn{2}{|c|}{\bfseries GSEMO2D} & \multicolumn{2}{|c|}{\bfseries GSEMO3D} &  \\ 
 &  & &    Mean & Std & Mean & Std & $p$-value &     Mean & Std & Mean & Std & $p$-value\\ \hline

\multirow{10}{*}{cfat200-1} & \multirow{10}{*}{uniform}& 0.2 & 3618 & 76 & \cellcolor{gray!20}\textbf{3599} & 82 & 0.308& 3615 & 91 & \cellcolor{gray!20}\textbf{3599} & 79 & 0.544\\
& & 0.1 & 3994 & 82 &   \cellcolor{gray!20}\textbf{3970} & 82 & 0.268& 3989 & 96 & \textbf{3972} & 80 & 0.544\\
& & 0.01 & 4877 & 101 & \cellcolor{gray!20}\textbf{4842} & 86 & 0.169& 4866 & 109 & \textbf{4845} & 86 & 0.535\\
& & 1.0E-4 & 6030 & 123 & \cellcolor{gray!20}\textbf{5985} & 94 & 0.128& 6015 & 126 & \textbf{5991} & 98 & 0.455\\
& & 1.0E-6 & 6870 & 139 & \cellcolor{gray!20}\textbf{6824} & 103 & 0.201& 6855 & 138 & \textbf{6832} & 108 & 0.605\\
& & 1.0E-8 & 7562 & 152 & \cellcolor{gray!20}\textbf{7519} & 113 & 0.326& 7546 & 147 & \textbf{7525} & 118 & 0.641\\
& & 1.0E-10 & 8163 & 163 & \cellcolor{gray!20}\textbf{8122} & 123 & 0.469& 8145 & 154 & \textbf{8125} & 125 & 0.751\\
& & 1.0E-12 & 8700 & 172 & \cellcolor{gray!20}\textbf{8660} & 130 & 0.535& 8680 & 159 & \cellcolor{gray!20}\textbf{8660} & 130 & 0.859\\
& & 1.0E-14 & 9190 & 180 & \textbf{9150} & 136 & 0.657& 9169 & 164 & \cellcolor{gray!20}\textbf{9148} & 133 & 0.842\\
& & 1.0E-16 & 9633 & 188 & \textbf{9593} & 142 & 0.636& 9611 & 168 & \cellcolor{gray!20}\textbf{9589} & 137 & 0.865\\ \hline

\multirow{10}{*}{cfat200-2} & \multirow{10}{*}{uniform} & 0.2 & 1797 & 72 & \textbf{1788} & 53 & 0.923& 1791 & 49 & \cellcolor{gray!20}\textbf{1767} & 32 & 0.049\\
& & 0.1 & 2049 & 78 & \textbf{2035} & 55 & 0.865& 2040 & 54 & \cellcolor{gray!20}\textbf{2016} & 37 & 0.074\\
& & 0.01 & 2634 & 92 & \textbf{2617} & 69 & 0.739& 2621 & 72 & \cellcolor{gray!20}\textbf{2593} & 51 & 0.162\\
& & 1.0E-4 & 3394 & 111 & \textbf{3369} & 85 & 0.535& 3381 & 97 & \cellcolor{gray!20}\textbf{3336} & 65 & 0.070\\
& & 1.0E-6 & 3948 & 125 & \textbf{3918} & 95 & 0.511& 3937 & 113 & \cellcolor{gray!20}\textbf{3880} & 71 & 0.044\\
& & 1.0E-8 & 4403 & 134 & \textbf{4372} & 106 & 0.496& 4394 & 124 & \cellcolor{gray!20}\textbf{4329} & 77 & 0.032\\
& & 1.0E-10 & 4799 & 143 & \textbf{4768} & 117 & 0.549& 4793 & 132 & \cellcolor{gray!20}\textbf{4720} & 82 & 0.028\\
& & 1.0E-12 & 5153 & 150 & \textbf{5123} & 125 & 0.559& 5149 & 139 & \cellcolor{gray!20}\textbf{5071} & 85 & 0.024\\
& & 1.0E-14 & 5476 & 157 & \textbf{5447} & 133 & 0.589& 5475 & 145 & \cellcolor{gray!20}\textbf{5391} & 88 & 0.020\\
& & 1.0E-16 & 5769 & 164 & \textbf{5740} & 141 & 0.559& 5769 & 150 & \cellcolor{gray!20}\textbf{5681} & 91 & 0.021\\ \hline

\multirow{10}{*}{ca-netscience} & \multirow{10}{*}{uniform} & 0.2 & 32922 & 1308 & \cellcolor{gray!20}\textbf{32608} & 904 & 0.506& 33042 & 1289 & \textbf{33007} & 1023 & 0.712\\
& & 0.1 & 34456 & 1323 & \cellcolor{gray!20}\textbf{34115} & 907 & 0.544& 34568 & 1302 & \textbf{34514} & 1028 & 0.745\\
& & 0.01 & 38097 & 1361 & \cellcolor{gray!20}\textbf{37694} & 919 & 0.408& 38189 & 1334 & \textbf{38089} & 1040 & 0.848\\
& & 1.0E-4 & 42938 & 1414 & \cellcolor{gray!20}\textbf{42461} & 938 & 0.274& 43012 & 1380 & \textbf{42846} & 1054 & 1.000\\
& & 1.0E-6 & 46527 & 1457 & \cellcolor{gray!20}\textbf{45995} & 951 & 0.255& 46591 & 1415 & \textbf{46377} & 1065 & 0.824\\
& & 1.0E-8 & 49500 & 1493 & \cellcolor{gray!20}\textbf{48923} & 960 & 0.198& 49557 & 1442 & \textbf{49303} & 1076 & 0.712\\
& & 1.0E-10 & 52091 & 1526 & \cellcolor{gray!20}\textbf{51478} & 970 & 0.165& 52145 & 1465 & \textbf{51857} & 1087 & 0.615\\
& & 1.0E-12 & 54416 & 1554 & \cellcolor{gray!20}\textbf{53773} & 979 & 0.147& 54467 & 1487 & \textbf{54150} & 1096 & 0.564\\
& & 1.0E-14 & 56542 & 1581 & \cellcolor{gray!20}\textbf{55873} & 987 & 0.132& 56592 & 1507 & \textbf{56249} & 1105 & 0.487\\
& & 1.0E-16 & 58469 & 1605 & \cellcolor{gray!20}\textbf{57776} & 996 & 0.117& 58517 & 1526 & \textbf{58151} & 1114 & 0.478 \\ \hline

\multirow{10}{*}{cfat200-1} & \multirow{10}{*}{uniform-fixed}& 0.2 & 3891 & 183 & \textbf{3851} & 129 & 0.530& 3813 & 125 & \cellcolor{gray!20}\textbf{3721} & 59 & 0.006\\
& & 0.1 & 4353 & 195 & \textbf{4306} & 135 & 0.464& 4269 & 133 & \cellcolor{gray!20}\textbf{4169} & 59 & 0.006\\
& & 0.01 & 5450 & 224 & \textbf{5385} & 149 & 0.333& 5352 & 151 & \cellcolor{gray!20}\textbf{5235} & 59 & 0.006\\
& & 1.0E-4 & 6913 & 264 & \textbf{6823} & 169 & 0.258& 6795 & 177 & \cellcolor{gray!20}\textbf{6655} & 59 & 0.006\\
& & 1.0E-6 & 7999 & 293 & \textbf{7890} & 183 & 0.234& 7868 & 196 & \cellcolor{gray!20}\textbf{7710} & 59 & 0.006\\
& & 1.0E-8 & 8901 & 317 & \textbf{8776} & 195 & 0.223& 8757 & 212 & \cellcolor{gray!20}\textbf{8586} & 59 & 0.006\\
& & 1.0E-10 & 9688 & 339 & \textbf{9549} & 206 & 0.217& 9534 & 226 & \cellcolor{gray!20}\textbf{9350} & 59 & 0.006\\
& & 1.0E-12 & 10395 & 358 & \textbf{10243} & 216 & 0.206& 10232 & 239 & \cellcolor{gray!20}\textbf{10036} & 59 & 0.006\\
& & 1.0E-14 & 11043 & 376 & \textbf{10878} & 225 & 0.191& 10871 & 250 & \cellcolor{gray!20}\textbf{10665} & 59 & 0.006\\
& & 1.0E-16 & 11630 & 392 & \textbf{11455} & 234 & 0.186& 11450 & 261 & \cellcolor{gray!20}\textbf{11235} & 59 & 0.006 \\ \hline

\multirow{10}{*}{cfat200-2} & \multirow{10}{*}{random-fixed} & 0.2 & 1989 & 116 & \textbf{1980} & 112 & 0.690& 1937 & 104 & \cellcolor{gray!20}\textbf{1866} & 50 & 0.011\\
& & 0.1 & 2307 & 128 & \textbf{2297} & 123 & 0.679& 2249 & 115 & \cellcolor{gray!20}\textbf{2171} & 54 & 0.011\\
& & 0.01 & 3064 & 157 & \textbf{3048} & 149 & 0.554& 2990 & 141 & \cellcolor{gray!20}\textbf{2897} & 63 & 0.011\\
& & 1.0E-4 & 4073 & 196 & \textbf{4049} & 185 & 0.554& 3978 & 176 & \cellcolor{gray!20}\textbf{3864} & 76 & 0.011\\
& & 1.0E-6 & 4822 & 225 & \textbf{4792} & 213 & 0.554& 4712 & 203 & \cellcolor{gray!20}\textbf{4583} & 86 & 0.011\\
& & 1.0E-8 & 5444 & 249 & \textbf{5410} & 236 & 0.554& 5321 & 225 & \cellcolor{gray!20}\textbf{5179} & 94 & 0.011\\
& & 1.0E-10 & 5986 & 269 & \textbf{5948} & 257 & 0.554& 5853 & 244 & \cellcolor{gray!20}\textbf{5700} & 102 & 0.011\\
& & 1.0E-12 & 6474 & 288 & \textbf{6432} & 275 & 0.554& 6330 & 261 & \cellcolor{gray!20}\textbf{6168} & 108 & 0.011\\
& & 1.0E-14 & 6920 & 305 & \textbf{6875} & 292 & 0.554& 6768 & 277 & \cellcolor{gray!20}\textbf{6596} & 114 & 0.011\\
& & 1.0E-16 & 7325 & 321 & \textbf{7276} & 308 & 0.554& 7164 & 291 & \cellcolor{gray!20}\textbf{6984} & 120 & 0.011 \\ \hline

\multirow{10}{*}{ca-netscience} & \multirow{10}{*}{uniform-fixed} & 0.2 & 35378 & 1891 & \textbf{32956} & 844 & 0.000& 34936 & 1747 & \cellcolor{gray!20}\textbf{32926} & 816 & 0.000\\
& & 0.1 & 37239 & 1934 & \textbf{34718} & 844 & 0.000& 36779 & 1785 & \cellcolor{gray!20}\textbf{34687} & 819 & 0.000\\
& & 0.01 & 41659 & 2038 & \textbf{38901} & 844 & 0.000& 41156 & 1878 & \cellcolor{gray!20}\textbf{38869} & 825 & 0.000\\
& & 1.0E-4 & 47551 & 2178 & \textbf{44475} & 844 & 0.000& 46991 & 2004 & \cellcolor{gray!20}\textbf{44439} & 831 & 0.000\\
& & 1.0E-6 & 51927 & 2282 & \textbf{48614} & 843 & 0.000& 51325 & 2098 & \cellcolor{gray!20}\textbf{48576} & 835 & 0.000\\
& & 1.0E-8 & 55559 & 2369 & \textbf{52049} & 842 & 0.000& 54922 & 2177 & \cellcolor{gray!20}\textbf{52009} & 838 & 0.000\\
& & 1.0E-10 & 58729 & 2445 & \textbf{55048} & 842 & 0.000& 58061 & 2246 & \cellcolor{gray!20}\textbf{55006} & 841 & 0.000\\
& & 1.0E-12 & 61577 & 2513 & \textbf{57741} & 843 & 0.000& 60881 & 2309 & \cellcolor{gray!20}\textbf{57698} & 844 & 0.000\\
& & 1.0E-14 & 64184 & 2576 & \textbf{60207} & 843 & 0.000& 63463 & 2366 & \cellcolor{gray!20}\textbf{60162} & 847 & 0.000\\
& & 1.0E-16 & 66548 & 2633 & \textbf{62443} & 844 & 0.000& 65805 & 2418 & \cellcolor{gray!20}\textbf{62397} & 850 & 0.000 \\ \hline

\multirow{10}{*}{cfat200-1} & \multirow{10}{*}{degree}  & 0.2 & 4495 & 143 & \textbf{4392} & 10 & 0.002& 4444 & 115 & \cellcolor{gray!20}\textbf{4387} & 6 & 0.001\\
& & 0.1 & 4835 & 148 & \textbf{4727} & 14 & 0.002& 4781 & 119 & \cellcolor{gray!20}\textbf{4721} & 9 & 0.003\\
& & 0.01 & 5642 & 158 & \textbf{5523} & 25 & 0.001& 5582 & 129 & \cellcolor{gray!20}\textbf{5512} & 16 & 0.004\\
& & 1.0E-4 & 6718 & 172 & \textbf{6584} & 39 & 0.001& 6650 & 143 & \cellcolor{gray!20}\textbf{6566} & 26 & 0.003\\
& & 1.0E-6 & 7517 & 184 & \textbf{7372} & 50 & 0.001& 7443 & 154 & \cellcolor{gray!20}\textbf{7349} & 34 & 0.003\\
& & 1.0E-8 & 8180 & 193 & \textbf{8025} & 59 & 0.001& 8101 & 163 & \cellcolor{gray!20}\textbf{7999} & 40 & 0.003\\
& & 1.0E-10 & 8758 & 202 & \textbf{8596} & 67 & 0.001& 8675 & 171 & \cellcolor{gray!20}\textbf{8567} & 45 & 0.003\\
& & 1.0E-12 & 9278 & 210 & \textbf{9108} & 74 & 0.001& 9191 & 178 & \cellcolor{gray!20}\textbf{9076} & 50 & 0.003\\
& & 1.0E-14 & 9754 & 217 & \textbf{9578} & 81 & 0.001& 9663 & 185 & \cellcolor{gray!20}\textbf{9542} & 55 & 0.003\\
& & 1.0E-16 & 10185 & 223 & \textbf{10003} & 87 & 0.001& 10091 & 191 & \cellcolor{gray!20}\textbf{9965} & 59 & 0.003 \\ \hline

\multirow{10}{*}{cfat200-2} & \multirow{10}{*}{degree} & 0.2 & 3218 & 227 & \textbf{3029} & 154 & 0.033& 3041 & 172 & \cellcolor{gray!20}\textbf{2963} & 4 & 0.027\\
& & 0.1 & 3448 & 235 & \textbf{3256} & 160 & 0.033& 3267 & 178 & \cellcolor{gray!20}\textbf{3185} & 6 & 0.027\\
& & 0.01 & 3996 & 255 & \textbf{3795} & 173 & 0.033& 3803 & 194 & \cellcolor{gray!20}\textbf{3713} & 11 & 0.027\\
& & 1.0E-4 & 4726 & 280 & \textbf{4514} & 193 & 0.033& 4518 & 216 & \cellcolor{gray!20}\textbf{4416} & 17 & 0.027\\
& & 1.0E-6 & 5268 & 300 & \textbf{5048} & 209 & 0.033& 5049 & 232 & \cellcolor{gray!20}\textbf{4938} & 22 & 0.027\\
& & 1.0E-8 & 5718 & 316 & \textbf{5491} & 223 & 0.033& 5490 & 245 & \cellcolor{gray!20}\textbf{5371} & 26 & 0.027\\
& & 1.0E-10 & 6110 & 329 & \textbf{5878} & 235 & 0.033& 5875 & 257 & \cellcolor{gray!20}\textbf{5749} & 30 & 0.027\\
& & 1.0E-12 & 6463 & 342 & \textbf{6225} & 247 & 0.033& 6220 & 267 & \cellcolor{gray!20}\textbf{6089} & 33 & 0.027\\
& & 1.0E-14 & 6786 & 354 & \textbf{6543} & 257 & 0.033& 6537 & 277 & \cellcolor{gray!20}\textbf{6400} & 36 & 0.027\\
& & 1.0E-16 & 7079 & 364 & \textbf{6832} & 267 & 0.033& 6823 & 286 & \cellcolor{gray!20}\textbf{6682} & 38 & 0.027 \\ \hline

\multirow{10}{*}{ca-netscience} & \multirow{10}{*}{degree}  & 0.2 & 28587 & 1535 & \cellcolor{gray!20}\textbf{26148} & 201 & 0.000& 28164 & 1002 & \textbf{26169} & 196 & 0.000\\
& & 0.1 & 30122 & 1580 & \cellcolor{gray!20}\textbf{27636} & 207 & 0.000& 29689 & 1029 & \textbf{27657} & 200 & 0.000\\
& & 0.01 & 33758 & 1686 & \cellcolor{gray!20}\textbf{31158} & 228 & 0.000& 33300 & 1098 & \textbf{31183} & 216 & 0.000\\
& & 1.0E-4 & 38593 & 1828 & \cellcolor{gray!20}\textbf{35840} & 269 & 0.000& 38103 & 1192 & \textbf{35874} & 251 & 0.000\\
& & 1.0E-6 & 42180 & 1936 & \cellcolor{gray!20}\textbf{39313} & 306 & 0.000& 41665 & 1265 & \textbf{39355} & 285 & 0.000\\
& & 1.0E-8 & 45155 & 2026 & \cellcolor{gray!20}\textbf{42192} & 338 & 0.000& 44620 & 1327 & \textbf{42243} & 317 & 0.000\\
& & 1.0E-10 & 47751 & 2104 & \cellcolor{gray!20}\textbf{44705} & 367 & 0.000& 47198 & 1381 & \textbf{44763} & 347 & 0.000\\
& & 1.0E-12 & 50082 & 2175 & \cellcolor{gray!20}\textbf{46962} & 394 & 0.000& 49514 & 1429 & \textbf{47026} & 374 & 0.000\\
& & 1.0E-14 & 52216 & 2239 & \cellcolor{gray!20}\textbf{49027} & 420 & 0.000& 51633 & 1474 & \textbf{49098} & 400 & 0.000\\
& & 1.0E-16 & 54151 & 2297 & \cellcolor{gray!20}\textbf{50900} & 444 & 0.000& 53555 & 1515 & \textbf{50977} & 423 & 0.000\\ \hline

    \end{tabular}
    \caption{
    Results for stochastic minimum weight dominating set with different confidence levels of $\alpha$ where $\alpha=1-\beta$. 
    }
    \label{tab:results}
\end{table*}

The 3-objective formulation is expected to produce much more trade-offs than the bi-objective formulation. We compare SEMO2D and SEMO3D, and GSEMO2D and GSEMO3D, respectively, in terms of the results that they obtain.
Comparing SEMO2D and SEMO3D allows to judge whether the additional objective that increases the population size is helpful in practice even if the mutation operation is highly restrictive. Each algorithm is run for each setting 30 times whereas each run consists of 10M iterations. The results for the $30$ runs carried out for each algorithm are obtained on a the same set of $30$ instances that are generated in the way described above. Note that one run produces results for each considered value of $\alpha$ as we chose from the final population the feasible solution with the smallest weight according to Equation~\ref{eq:weight}. Obviously, finding these solutions can be done in time linear in the size of the final population produced by the considered algorithm.

The results for the maximum population sizes of the considered settings are shown in Table~\ref{tab:popsize}.
For each setting and algorithm, we show the average maximum population size within the $30$ runs and their standard deviations. 
It can be observed that the population sizes encountered by the approaches using the $3$-objective formulations are significantly higher than the maximum population sizes for the bi-objective formulations. Even for relatively small graphs such as cfat200-1, the maximum population sizes for SEMO3D and GSEMO3D are close to $3000$, and reach up to 7500 for SEMO3D and ca-GrQc in the uniform setting. Comparing SEMO3D and GSEMO3D, it is interesting to see that the maximum population size encountered by GSEMO3D is in most cases smaller than for SEMO3D. A possible explanations is that the standard bit mutations used in GSEMO3D are often able to create objective vectors that dominate part of the current population, which reduces the population size.

The optimization results for the different graphs and chance constrained settings are shown in Table~\ref{tab:results}. 
For each setting, we show the average weight value according to Equation~\ref{eq:weight} and standard deviation for the algorithms. 
The best average value in the direct comparison of SEMO2D and SEMO3D, and GSEMO2D and GSEMO3D, respecively, are highlighted in bold. Furthermore, we highlight in grey the best result among all $4$ algorithms.
We also display the $p$-value obtained by the Mann-Whitney test for the comparison of SEMO2D and SEMO3D, and GSEMO2D and GSEMO3D, respectively. We call a result statistically significant if the $p$-value is at most $0.05$.

Our results show that for the graphs cfat200-1, cfat200-2, and ca-netscience, there is usually a strong benefit of using the $3$-objective formulation instead of the bi-objective one.
For the uniform setting where the expected weights and variances are chosen uniformly at random, we can see that either SEMO3D or GSEMO3D obtain the best average weight value for each setting. Interestingly SEMO3D obtains better results than GSEMO3D and SEMO2D better results than GSEMO2D for the ca-netscience graph which might be due to the larger set of trade-offs occurring in the uniform setting, enabling success by carrying out local mutations only.
Considering the uniform-fixed setting, we see that GSEMO3D obtains the best results and significantly outperforms GSEMO2D in all settings. Comparing SEMO3D and SEMO2D for the uniform-fixed setting, we observe smaller average weight values for SEMO3D although the results are only statistically significant for the instances of the graph ca-netscience.

The degree-based setting shows clear advantages for the 3-objec\-tive setting with SEMO3D outperforming SEMO2D and GSEMO3D outperforming GSEMO2D. All results are statistically significant. Interestingly, GSEMO3D has the smallest average cost values for cfat200-1 and cfat200-2, and SEMO3D the smallest average cost values for ca-netscience. We have already seen before that SEMO3D has smaller average costs than GSEMO3D for ca-netscience in the uniform setting which suggest that using only $1$-bit flips instead of standard bit mutations is preferable when working with the given evaluation budget of 10M for this graph of around 400 nodes and variable variances chosen uniformly at random in $\{n^2, \ldots, 2n^2\}$.

For the graphs ca-GrQc and Erdos992 which consist of 4158 and  6100 nodes, respectively, the $3$-objective models often do not obtain a feasible solution within 10M iterations when starting with a solution uniformly at random. A reason for this is the large number of trade-offs resulting in large population sizes before obtaining a feasible solution for the first time. In contrast to this the bi-objective formulation has a clear advantage here as it always works with a population of size $1$ and can only increase its population size when producing feasible solutions. Therefore, we do not display results for these graphs.

\section{Conclusions}
We explored the use of $3$-objective formulation for chance constrained optimization problems using evolutionary multi-objective algorithms.
Our three objective formulation takes the expected weight and its variance each as one objective and adds an additional objective which can be the deterministic objective function or deterministic constraint. For the case of independent Normally distributed uncertainties, we have show that this approach computes optimal solution for the case of the objective function counting the number of chosen elements. Furthermore, we have pointed out that single $1$-bit flips are the key element for the success of our $3$-objective formulation. We showed this through an improved upper bound for the considered problem that showed that the crucial Pareto optimal objective vectors can be obtained through $1$-bit flips.
Our experimental investigations for the chance constrained dominating set problem show the clear advantage of our $3$-objective setting for graphs of moderate size in various stochastic settings.

\section*{Acknowledgments}
This work has been supported by the Australian Research Council (ARC) through grant FT200100536.

\end{document}